\documentclass[pdflatex,sn-basic,Numbered]{sn-jnl}

\usepackage{amsmath}
\usepackage{amssymb}
\usepackage{mathtools}
\usepackage{bm}
\usepackage{amsthm}
\theoremstyle{thmstyleone}
\newtheorem{theorem}{Theorem}[section]

\usepackage{booktabs}
\usepackage{makecell}
\usepackage{tabularx}
\usepackage{longtable}
\usepackage{pdflscape}
\usepackage{array}
\usepackage{graphicx}

\usepackage{capt-of}
\usepackage{xcolor}
\usepackage{colortbl}
\definecolor{rowgray}{RGB}{246,246,246}
\definecolor{bestcell}{RGB}{252,238,216}
\usepackage{microtype}

\usepackage{placeins}
\usepackage{needspace}
\usepackage{url}
\hypersetup{hidelinks}
\graphicspath{{./}}

\makeatletter
\newcommand{\projectleadmark}{\g@addto@macro\artauthors{\textsuperscript{$\dagger$}}}
\newcommand{\correspondingmark}{\g@addto@macro\artauthors{\textsuperscript{$\ddagger$}}}
\newcommand{\authorroleline}{\g@addto@macro\auaddress{\textsuperscript{$\dagger$}Project lead.\quad\textsuperscript{$\ddagger$}Corresponding author: \href{mailto:2024002850@mails.cust.edu.cn}{2024002850@mails.cust.edu.cn}\par}}
\makeatother

\begin{document}
\title{R1A-PC: Physics-Guided Electromagnetic Inversion of Three-Dimensional Human Point Clouds in Complex Static Environments}
\author[1]{\textbf{Xudong Yuan}}
\author[1]{\textbf{Ruyun Xu}}
\author[1]{\textbf{Jingtai Yang}}\projectleadmark
\author[1]{\textbf{Xianzheng Sun}}\correspondingmark
\affil[1]{\orgdiv{School of Mathematics and Statistics}, \orgname{Changchun University of Science and Technology}, \city{Changchun}, \country{China}}
\authorroleline
\abstract{Recovering three-dimensional human geometry from electromagnetic measurements in a complex static environment is difficult because strong multipath responses from walls, floors, and other objects obscure the weak target perturbation. We propose R1A-PC, a physics-guided method that reconstructs a 2048-point human cloud from paired complex fields measured with and without the target. Complex background subtraction emphasizes target-induced amplitude and phase changes, while the background field remains available as an environmental condition. A frequency-balanced discrete Born adjoint produces a three-dimensional spatial knowledge map. At each of two bounded deformation stages, the decoder combines complex measurement features, background features, and multiscale physical features queried at the current point coordinates; the second stage queries again after the first coordinate update. We analyze the residual of paired subtraction, the weighted normal-operator structure of the raw adjoint, and the feasible set of the predicted cloud. In a held-out background generated by full-wave simulation under a fixed acquisition geometry, R1A-PC obtains a squared Chamfer distance of 0.001434~m$^2$ and an F-score of 0.963080 at 0.05~m. Compared with TopNet, the Chamfer distance decreases by 70.28\%. Removing physical guidance or background subtraction increases the Chamfer distance by 242.06\% or 241.18\%, respectively. Experiments across background layouts and poses support the complementary roles of paired subtraction and position-dependent adjoint features.}
\keywords{electromagnetic inverse scattering, physics-guided learning, 3D human reconstruction, point cloud, adjoint operator, background subtraction}
\maketitle

\section{Introduction}\label{sec:introduction}

Contactless three-dimensional human sensing can provide geometric information for indoor safety, intelligent spaces, and activity understanding under occlusion. A point cloud represents position, body shape, and limb pose in a common coordinate system. Electromagnetic measurements do not depend on visible-light texture and can record complex responses related to propagation and target structure in low light or partial occlusion \cite{chen2023meshfree,bi2024diffusionemis}. Recovering geometry from limited transmitters, receivers, and frequencies requires a map from propagation-mixed fields to an unordered set of surface points. Analytical inversion supplies propagation structure, whereas learned methods supply a prior on human geometry \cite{colton2019inverse,arridge2019solving}.

Two coupled difficulties arise in complex static scenes. First, strong multipath responses from walls, floors, and fixed objects may dominate the target increment. A model trained directly on target-present fields can learn shortcuts based on the background and then misplace or distort the body in an unseen scene. Subtracting a paired background response suppresses the shared static component but does not remove target--background interactions. Second, a finite aperture yields ambiguous spatial support, and complex measurements have no pointwise correspondence with an unordered output cloud \cite{bornwolf1999principles,colton2019inverse,engl1996regularization}. Point-set generators and electromagnetic point-cloud inversion demonstrate the feasibility of fixed-cardinality reconstruction \cite{tchapmi2019topnet,yuan2018pcn,chen2023meshfree,bi2024diffusionemis}, but a global measurement code gives each generated point little location-specific evidence. Adjoint backprojection provides a location cue but is blurred by the point-spread function and sidelobes.

We address the first difficulty with paired target-present and background fields acquired in the same environment. Their complex difference emphasizes target-induced changes in both amplitude and phase, while an encoding of the background retains information about the propagation environment. For the second difficulty, the conjugate adjoint of a discrete Born operator maps the measurements to a common domain-of-interest (DOI) grid. A multiscale volume encoder preserves this spatial evidence, which is sampled at the current coordinates of each point. After the first deformation, the second stage samples the physical features again at the updated coordinates.

The contributions are threefold. First, paired complex subtraction and background conditioning are combined, and a residual bound identifies the effect of mismatch between paired measurements. Second, a frequency-balanced Born adjoint is connected to a two-stage, bounded, pointwise deformation by multiscale queries at current point positions. A weighted normal-operator analysis and a feasible-set bound clarify the spatial information and coordinate constraints. Third, full-wave simulations with background-disjoint splits evaluate the principal test set, component ablations, transfer across environments, and seven additional pose groups. On env\_004, R1A-PC reduces squared Chamfer distance (CD-L2) by 70.28\% relative to TopNet; removing physical guidance or background subtraction increases CD-L2 by 242.06\% or 241.18\%.

Figure~\ref{fig:measurement-configuration} depicts the acquisition geometry. Section~\ref{sec:related-work} reviews related work, Section~\ref{sec:problem-formulation} defines the inverse task, Section~\ref{sec:method} presents R1A-PC, Section~\ref{sec:theory} analyzes its components, and Section~\ref{sec:experiments} reports the experiments.

\begin{figure}[!tb]
\centering
\ifdim\columnwidth<\textwidth
\includegraphics[width=\columnwidth]{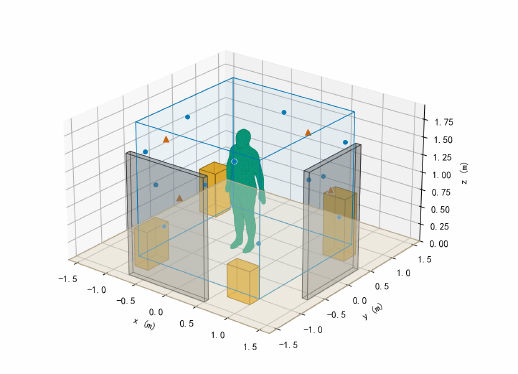}
\else
\includegraphics[width=0.53\textwidth]{Fig1.pdf}
\fi
\caption{Three-dimensional electromagnetic acquisition geometry in a complex static environment. The env\_004 example shows the DOI, four transmitters, twelve receiver locations, walls, static objects, and the human mesh; coordinates are in metres}
\label{fig:measurement-configuration}
\end{figure}

\section{Related Work}\label{sec:related-work}

\subsection{Learning-Based Electromagnetic Inverse Scattering}\label{sec:related-em}

Electromagnetic inverse scattering balances a forward propagation model, measurement error, and prior regularization. Born linearization and adjoint backprojection can rapidly locate scattering support, but multiple scattering, limited aperture, and the point-spread function of the normal operator limit quantitative recovery. Learned inversion absorbs part of the regularization or iteration into training, through direct regression, learned unrolling, or operator-driven reconstruction. Generalization depends on the training distribution, regularization, and forward-model error \cite{arridge2019solving}.

Recent electromagnetic methods incorporate physical structure through residual learning, enhanced imaging, multichannel fusion, self-supervised unrolling, conditional generation, and time-domain models \cite{guo2023residual,desmal2024deeplearning,chen2023earlyfusion,liu2024selfsupervised,yu2024conditional,yang2024timedomain}. Physics-induced networks, multi-imaging-physics unrolling, prior-informed models, and attention-based inversion further show the value of explicit operators and physical features \cite{liu2024physicsinduced,kofler2024miphduo,wei2024prior,chen2023general,li2025physicsassisted,zhang2025attention}. Most outputs, however, are regular-grid images. Human point clouds in cluttered rooms require both an unordered coordinate decoder and a way to distinguish target perturbations from scene-dependent propagation.

Paired background subtraction provides control at the measurement level. It attenuates the response shared by two acquisitions in the same static scene before the residual complex field reaches the learner. Its effectiveness depends on consistent acquisition: drift and independent noise remain in the difference. We therefore retain the background field as a separate condition, analyze the residual explicitly, and use environment-level data separation to test the effect of changed propagation conditions.

\subsection{Physics-Guided Three-Dimensional Point-Cloud Recovery}\label{sec:related-pointcloud}

Physics-informed learning introduces equations or discrete operators into the input, architecture, or objective. Equation residuals and hybrid models can improve data efficiency and extrapolation \cite{raissi2019physics,karniadakis2021physics}; work on knowledge calibration, stable inverse maps, and finite-volume networks also highlights the roles of scaling, discretization error, and the injection point \cite{zhuang2025stable,chen2023knowledge,mei2024unified}. Here, the discrete adjoint offers spatial evidence but cannot directly produce a human surface: its output lives on a contrast grid, whereas the desired cloud consists of continuous coordinates.

A global physical-consistency loss does not necessarily supply a direction for each output point. Compressing a backprojection into one latent vector weakens its coordinate information, while treating high-valued adjoint voxels as surface points inherits aperture blur and sidelobes. Retaining a multiscale spatial field and querying it continuously at each current point position connects the grid representation to coordinate updates.

Unordered point-set methods use symmetric aggregation, folding, set distances, or tree generators to model surfaces with a fixed number of points \cite{qi2017pointnet,yang2018foldingnet,fan2017pointset,tchapmi2019topnet}. Coarse-to-fine completion appears in PCN, PoinTr, and SnowflakeNet \cite{yuan2018pcn,yu2021pointr,xiang2021snowflake}. Subsequent work strengthens local relations through geometry-aware queries, anchors, hyperbolic distances, and view conditions \cite{yu2023adapointr,zhang2023anchorformer,lin2023hyperbolic,fu2023vapcnet}, or reduces dependence on complete inputs through reprojection and latent-shape learning \cite{cui2023p2c,wu2024fsc,chen2024walkformer,li2024latent}. In electromagnetic inversion, the input is a cross-modal field rather than a partial set of visible target points.

Prior electromagnetic point-cloud inversion shows that unordered coordinates can avoid the representational burden of a large output grid; diffusion-based generation can model more complex geometry \cite{chen2023meshfree,bi2024diffusionemis}. These studies mainly consider free space or simpler backgrounds. R1A-PC adds paired background conditioning and updates the adjoint evidence used by each point as its coordinates change.

\section{Problem Formulation}\label{sec:problem-formulation}

Let $(e,t,f,r,c)$ index the static environment, transmitter, frequency, receiver, and electric-field component. The target-present and background responses, acquired under the same geometry and in the same environment, are
\begin{equation}
\mathbf E_e^{\mathrm{obj}},\mathbf E_e^{\mathrm{bg}}
\in\mathbb C^{T\times F\times R\times C},
\qquad (T,F,R,C)=(4,3,12,3),
\label{eq:paired-fields}
\end{equation}
The frequencies are 300, 450, and 600~MHz. The real and imaginary parts of all three field components are retained. The target-related observation is the elementwise complex difference
\begin{equation}
\Delta\mathbf E_e=\mathbf E_e^{\mathrm{obj}}-\mathbf E_e^{\mathrm{bg}}.
\label{eq:complex-difference}
\end{equation}

Because the paired responses share the static scene and acquisition geometry, Eq.~\eqref{eq:complex-difference} suppresses their common scattering contribution while preserving target-induced amplitude and phase changes. Multipath propagation, occlusion, and target--background coupling remain. The inverse map therefore receives both $\Delta\mathbf E_e$ and $\mathbf E_e^{\mathrm{bg}}$: the former emphasizes the target increment, and the latter conditions the interpretation of residual scene effects. Subtracting the complex components rather than magnitudes preserves relative phase information associated with delay and interference.

The domain of interest is
\begin{equation}
\Omega=[-1,1]\times[-1,1]\times[0,1.9]\ \mathrm m^3.
\label{eq:doi}
\end{equation}
A common DOI sampling grid defines the spatial knowledge representation; it is distinct from the finite-element degrees of freedom used by the full-wave solver. The model predicts 2048 metric coordinates
\begin{equation}
\widehat{\mathcal P}=\{\widehat{\mathbf p}_i\}_{i=1}^{2048}
\in\mathbb R^{2048\times3},
\qquad \widehat{\mathbf p}_i\in\Omega,
\label{eq:pointcloud-output}
\end{equation}
and learns the conditional inverse map
\begin{equation}
\widehat{\mathcal P}=\mathcal F_\theta
\left(\Delta\mathbf E_e,\mathbf E_e^{\mathrm{bg}},\bar S_e\right),
\label{eq:inverse-map}
\end{equation}
The input is a complex tensor indexed by transmitter, frequency, receiver, and component, whereas the output is an unordered point set. Limited acquisition channels leave the spatial distribution underdetermined; multiple frequencies, the adjoint map, a human-shape prior, and the environmental condition supply complementary constraints. The reconstruction must recover position, limb orientation, and surface coverage within $\Omega$ without depending on an arbitrary ordering of target points.

Here $\bar S_e$ is a three-dimensional physics-derived knowledge map constructed from the complex difference and a discrete adjoint. The formal output consists only of unordered point coordinates.

\section{R1A-PC Method}\label{sec:method}

\begin{figure*}[!tp]
\centering
\includegraphics[width=0.94\textwidth]{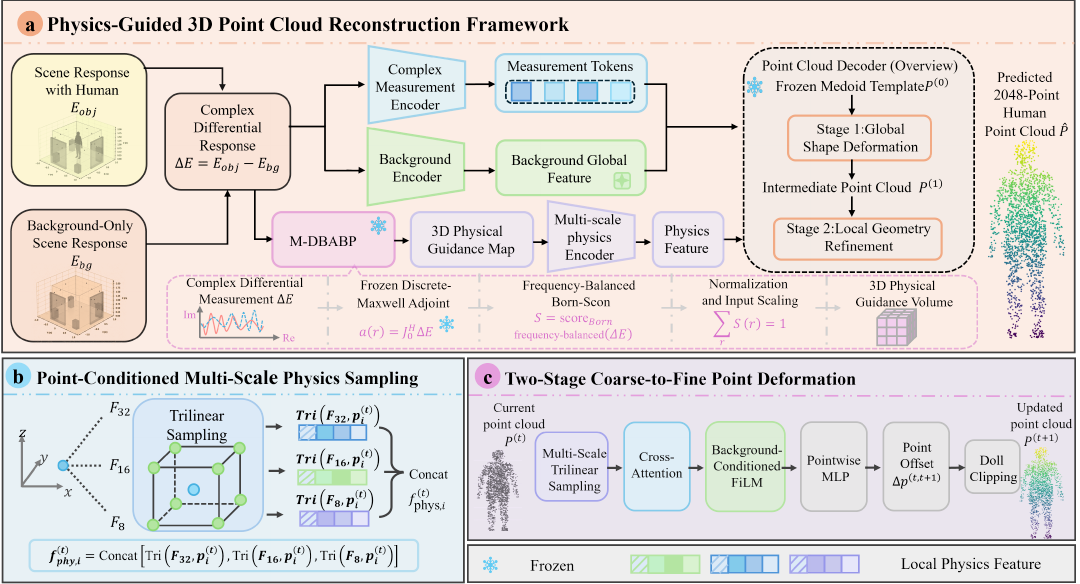}
\caption{R1A-PC architecture. Paired complex responses yield measurement, background, and adjoint spatial conditions. Multiscale queries at current point locations drive two bounded deformation stages to output a 2048-point human cloud}
\label{fig:architecture}
\end{figure*}

\subsection{Overview}\label{sec:method-overview}

R1A-PC comprises complex measurement encoding, background conditioning, a frequency-balanced adjoint, multiscale volumetric encoding, and two-stage point-cloud deformation (Fig.~\ref{fig:architecture}). The difference $\Delta\mathbf E_e$ is encoded as a sequence that retains relations among transmitters, frequencies, and receivers, and is also backprojected by the discrete Born adjoint to form spatial evidence. A separate encoder maps $\mathbf E_e^{\mathrm{bg}}$ to a global environmental condition. Starting from a 2048-point template determined from training data, the decoder queries physical features at current point coordinates and combines them with the measurement sequence and background condition to update each point.

The three paths carry complementary information. The measurement sequence preserves complex target-induced changes across acquisition channels; the adjoint path places the same perturbation in the DOI; and the background path summarizes scene-dependent propagation. Measurement and background representations are shared by the points of one sample, while each point reads different local evidence from the shared physical volume. The first stage adjusts position and coarse shape; the second requeries the volume at updated coordinates and refines the surface.

\subsection{Complex Difference and Background Conditioning}\label{sec:complex-encoding}

For a fixed $(t,f,r)$, the three Cartesian field components form a token of paired real and imaginary values:
\begin{equation}
\begin{aligned}
\boldsymbol\tau_{tfr}
&=\bigl[
\operatorname{Re}\Delta E_x,\operatorname{Im}\Delta E_x,
\operatorname{Re}\Delta E_y,\operatorname{Im}\Delta E_y,\\
&\qquad
\operatorname{Re}\Delta E_z,\operatorname{Im}\Delta E_z
\bigr]_{tfr}\in\mathbb R^6.
\end{aligned}
\label{eq:complex-token}
\end{equation}
This Cartesian representation is continuous even near zero amplitude and retains the information needed to represent phase. Encoding all tokens gives a measurement sequence
\begin{equation}
\mathbf Z_m=\operatorname{Enc}_m
\left(\{\boldsymbol\tau_{tfr}\}_{t,f,r}\right),
\label{eq:measurement-encoder}
\end{equation}
which each template point reads through cross-attention. The paired background response enters a separate encoder in the same real--imaginary organization and is averaged to a sample-level condition $\mathbf z_b$. Keeping the target increment and the environmental response in separate branches prevents the difference alone from having to explain residual propagation effects.

The measurement sequence is not reduced to a single vector before decoding. A point forms a query from its current state and can attend to acquisition channels that are informative for its location. The background condition modulates the fused point features, accounting for propagation effects shared across the body; local differences are supplied by the adjoint volume, current coordinates, and measurement attention. Paired subtraction removes the common static response, but background drift and target--environment interactions can remain, motivating this separate conditioning branch.

\subsection{Frequency-Balanced Adjoint Knowledge}\label{sec:adjoint-knowledge}

Flatten $\Delta\mathbf E_e$ to $\Delta\mathbf y_e$. The discrete Born operator $\mathbf A_{e,t,f}$ maps complex contrast on a common DOI grid to the receivers for transmitter $t$ and frequency $f$; its conjugate adjoint maps complex residuals back to that grid. To balance the overall amplitudes of the frequencies, define the sample-specific scale and raw adjoint field by
\begin{equation}
d_{e,f}=\left\|\Delta\mathbf y_{e,:,f,:,:}\right\|_2+10^{-12},
\qquad
\mathbf g_{e,t,f}=\mathbf A_{e,t,f}^{H}
\left(\frac{\Delta\mathbf y_{e,t,f}}{d_{e,f}}\right).
\label{eq:raw-adjoint}
\end{equation}
Transmitters at the same frequency share this scale, preserving their relative energy. After taking magnitudes, the raw complex adjoint fields are aggregated across transmitters and frequencies:
\begin{equation}
\begin{aligned}
S_e(\mathbf r)
&=\sum_{t=1}^{4}\sum_{f=1}^{3}|g_{e,t,f}(\mathbf r)|,\\
\bar S_e(\mathbf r)
&=\frac{S_e(\mathbf r)}{\sum_{\mathbf q\in\Omega_h}S_e(\mathbf q)},\\
V_e&=32768\,\bar S_e.
\end{aligned}
\label{eq:knowledge-map}
\end{equation}
Taking magnitudes before aggregation avoids cancellation among illuminations with different phases. Spatial normalization emphasizes the relative support within the DOI; the full complex relation remains available in the measurement branch. Because scaling, magnitudes, and normalization make $V_e$ data-dependent and nonlinear, the normal-operator interpretation in Section~\ref{sec:theory-normal} applies specifically to the raw adjoint before these operations.

The adjoint maps measurement sensitivity to a spatial coordinate system using the known propagation structure. Finite aperture, Born-model error, and background mismatch still yield broadened peaks and sidelobes. Frequency balancing uses one norm per frequency over all transmitters, receivers, and field components, so the relative strengths of illuminations at a given frequency are preserved. The factor 32768 changes the scale, but not the ranking, of grid locations. The DOI grid is a common sampling representation, not the full-wave solver mesh; its values are support evidence rather than direct surface points.

\subsection{Multiscale Volumetric Encoding and Current-Point Queries}\label{sec:multiscale-fusion}

A three-dimensional encoder transforms $V_e$ into features at three spatial scales:
\begin{equation}
\mathbf F_{32}\in\mathbb R^{16\times32^3},\quad
\mathbf F_{16}\in\mathbb R^{32\times16^3},\quad
\mathbf F_{8}\in\mathbb R^{64\times8^3}.
\label{eq:physics-pyramid}
\end{equation}
The fine scale preserves differences near adjoint peaks; coarser scales provide a wider receptive field for body structure. For the current point $\mathbf p_i^{(s)}$ at stage $s$, its local physical feature is
\begin{equation}
\mathbf f_{\mathrm{phys},i}^{(s)}=
\operatorname{Concat}\!\left[
\operatorname{Tri}(\mathbf F_{32},\mathbf p_i^{(s)}),
\operatorname{Tri}(\mathbf F_{16},\mathbf p_i^{(s)}),
\operatorname{Tri}(\mathbf F_{8},\mathbf p_i^{(s)})
\right].
\label{eq:trilinear-fusion}
\end{equation}
Here $\operatorname{Tri}$ denotes trilinear interpolation in the metric DOI coordinates. It gives a continuous pointwise feature even between voxel centers. After the first deformation, the second stage queries all three scales at the updated coordinates. Coarse context helps distinguish broad target support from isolated sidelobes, while fine features help localize the surface. The volume is computed once for the sample and reused for all 2048 queries.

\subsection{Two-Stage Bounded Point-Cloud Deformation}\label{sec:bounded-deformation}

Each point query combines a template-index feature, the current coordinate, and $\mathbf f_{\mathrm{phys},i}^{(s)}$, and attends to $\mathbf Z_m$. The background feature linearly modulates the fused point representation:
\begin{equation}
\begin{aligned}
\widetilde{\mathbf h}_i^{(s)}
&=\mathbf h_i^{(s)}\odot
\left(\mathbf 1+\tanh\boldsymbol\gamma_i^{(s)}\right)
+\boldsymbol\beta_i^{(s)},\\
(\boldsymbol\gamma_i^{(s)},\boldsymbol\beta_i^{(s)})
&=\operatorname{FiLM}_s(\mathbf z_b).
\end{aligned}
\label{eq:film}
\end{equation}
Measurement attention selects relevant complex acquisition channels; background modulation adjusts propagation-dependent bias. The stagewise displacement and coordinate update are
\begin{equation}
\begin{aligned}
\Delta\mathbf p_i^{(s)}
&=a_s\tanh\!\left(\operatorname{MLP}_s
(\widetilde{\mathbf h}_i^{(s)})\right),\\
\mathbf p_i^{(s)}
&=\Pi_\Omega\!\left(\mathbf p_i^{(s-1)}+
\Delta\mathbf p_i^{(s)}\right),\\
a_1&=0.25\ \mathrm m,\qquad a_2=0.10\ \mathrm m.
\end{aligned}
\label{eq:stage-update}
\end{equation}
The first stage allows a larger change in pose, position, and outline; the second refines local shape with a smaller step. At each stage, the box projection $\Pi_\Omega$ keeps the point within the DOI, so the next physical query is well-defined there. The final reconstruction is $\widehat{\mathcal P}=\mathcal P^{(2)}$.

Template indices provide persistent point identities across the two updates, but no explicit topological edges are imposed. The learned set distance supervises spatial coverage. In Eq.~\eqref{eq:film}, the multiplicative term changes feature gain, the additive term compensates scene-dependent bias, and $\tanh$ bounds the gain change. The displacement bounds control the per-stage search range; body-part directions and local density are learned from the fused evidence and training examples.

Figure~\ref{fig:deformation-stages} compares first-stage predictions (P1), final predictions (P2), and ground truth for four fixed env\_003 examples. The panels use the same coordinate limits, orthographic camera, point size, and height color scale.

\begin{figure*}[!tp]
\centering
\includegraphics[width=\textwidth]{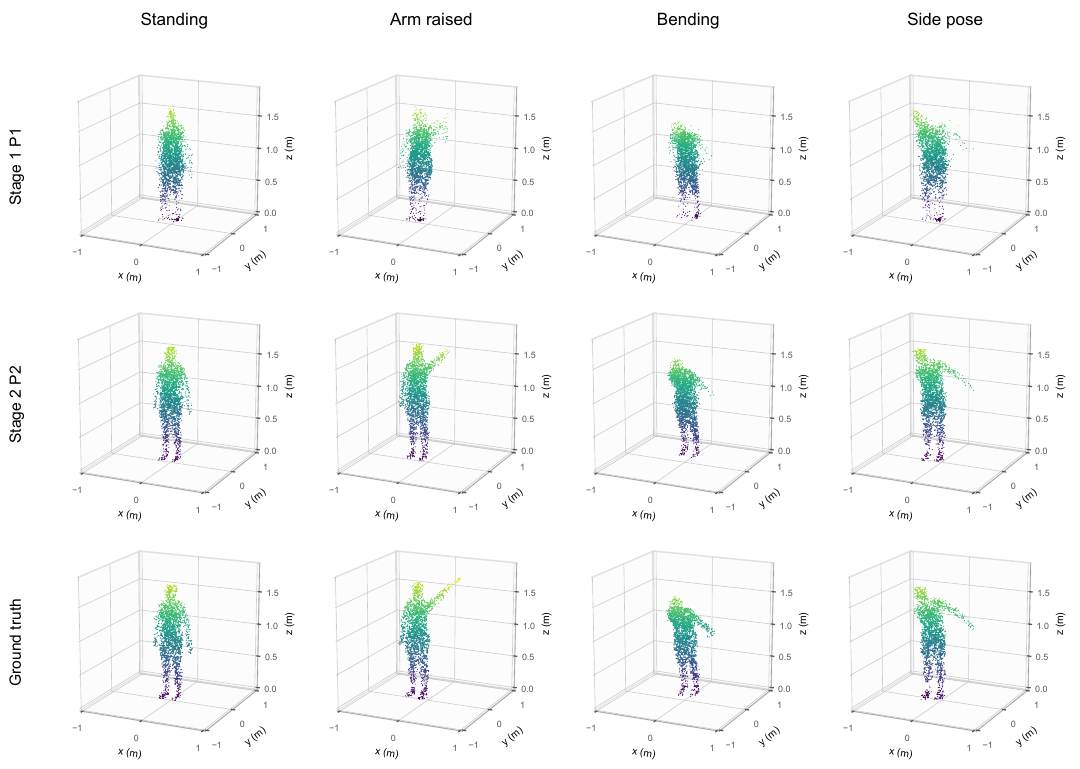}
\caption{Two-stage point-cloud evolution for four fixed env\_003 samples. Columns show standing, raised arm, bent torso, and side-facing poses; rows show stage P1, stage P2, and ground truth. All panels use the same coordinate limits, orthographic camera, point size, and height color scale}
\label{fig:deformation-stages}
\end{figure*}

\subsection{Cross-Environment Consistency and Geometry Objective}\label{sec:cross-environment-training}

During training, responses for the same human target in two training backgrounds form a pair. The two paths share model parameters but compute their own differences, background conditions, and physical maps. Let their predictions be $\widehat{\mathcal P}_0$ and $\widehat{\mathcal P}_1$, with common ground truth $\mathcal P^\star$. Define
\begin{equation}
\mathcal L_{\mathrm{cd}}=
\frac12\sum_{j=0}^{1}
\operatorname{CD}(\widehat{\mathcal P}_j,\mathcal P^\star),
\qquad
\mathcal L_{\mathrm{pair}}=
\operatorname{CD}(\widehat{\mathcal P}_0,\widehat{\mathcal P}_1),
\label{eq:geometry-loss}
\end{equation}
where CD is the symmetric squared Chamfer distance. The geometry objective is
\begin{equation}
\mathcal L_{\mathrm{geom}}=
\mathcal L_{\mathrm{cd}}+0.10\mathcal L_{\mathrm{pair}}.
\label{eq:geometry-objective}
\end{equation}
The ground-truth term anchors geometric accuracy, while the consistency term penalizes output drift between backgrounds.

The pairwise constraint is imposed on final point sets, not on the intermediate fields or adjoint maps, which legitimately differ across environments. A symmetric set distance does not require an indexwise match between predicted and true points. The relatively small consistency weight makes the term an auxiliary regularizer while retaining direct geometric supervision as the primary objective.

\section{Theoretical Analysis}\label{sec:theory}

The following results concern three distinct spaces: the complex measurement vector, the DOI contrast grid, and the output coordinates. They quantify, respectively, the residual after paired subtraction, the structure of the raw adjoint map, and the feasible set of the bounded decoder.

\subsection{Residual Bound for Background Subtraction}\label{sec:theory-difference}

Let $\mathbf y_e^{\mathrm{bg}}$ be the ideal background field and $\mathbf s_e(\chi)$ the target-induced scattering response. The target-present observation has complex error $\boldsymbol\eta^{\mathrm{obj}}$; the paired background observation has drift $\boldsymbol\delta_e$ and error $\boldsymbol\eta^{\mathrm{bg}}$:
\begin{equation}
\mathbf y_e^{\mathrm{obj}}=\mathbf y_e^{\mathrm{bg}}+
\mathbf s_e(\chi)+\boldsymbol\eta^{\mathrm{obj}},\qquad
\widetilde{\mathbf y}_e^{\mathrm{bg}}=\mathbf y_e^{\mathrm{bg}}+
\boldsymbol\delta_e+\boldsymbol\eta^{\mathrm{bg}}.
\label{eq:paired-observation-model}
\end{equation}

\begin{theorem}[Residual bound for paired subtraction]\label{thm:difference-bound}
Under Eq.~\eqref{eq:paired-observation-model}, let
\(\widetilde{\Delta\mathbf y}_e=\mathbf y_e^{\mathrm{obj}}-
\widetilde{\mathbf y}_e^{\mathrm{bg}}\), then
\begin{equation}
\left\|\widetilde{\Delta\mathbf y}_e-\mathbf s_e(\chi)\right\|_2
\le \|\boldsymbol\delta_e\|_2+
\|\boldsymbol\eta^{\mathrm{obj}}\|_2+
\|\boldsymbol\eta^{\mathrm{bg}}\|_2.
\label{eq:difference-bound}
\end{equation}
\end{theorem}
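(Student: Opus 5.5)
The plan is a direct algebraic expansion followed by the triangle inequality; no earlier result beyond the observation model in Eq.~\eqref{eq:paired-observation-model} is needed. First, I will substitute both lines of Eq.~\eqref{eq:paired-observation-model} into the definition of $\widetilde{\Delta\mathbf y}_e$. The ideal background $\mathbf y_e^{\mathrm{bg}}$ appears with coefficient $+1$ in the target-present observation and $-1$ in the paired background observation, so it cancels exactly. This gives the identity
\begin{equation*}
\widetilde{\Delta\mathbf y}_e-\mathbf s_e(\chi)
=\boldsymbol\eta^{\mathrm{obj}}-\boldsymbol\delta_e-\boldsymbol\eta^{\mathrm{bg}},
\end{equation*}
which holds componentwise in the complex measurement space $\mathbb C^{TFRC}$. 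Note that $\mathbf s_e(\chi)$ is left untouched, whatever its (possibly nonlinear) dependence on $\chi$. The target--background coupling is therefore absorbed into the definition of $\mathbf s_e(\chi)$ and does not appear in the residual.

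Second, I will take the Euclidean norm on $\mathbb C^{TFRC}$, i.e.\ the norm induced by the Hermitian inner product. I will then apply the triangle inequality twice, together with the fact that $\|-\mathbf v\|_2=\|\mathbf v\|_2$ (absolute homogeneity with scalar $-1$). This yields
\begin{equation*}
\|\boldsymbol\eta^{\mathrm{obj}}-\boldsymbol\delta_e-\boldsymbol\eta^{\mathrm{bg}}\|_2
\le\|\boldsymbol\delta_e\|_2+\|\boldsymbol\eta^{\mathrm{obj}}\|_2+\|\boldsymbol\eta^{\mathrm{bg}}\|_2,
\end{equation*}
which is exactly Eq.~\eqref{eq:difference-bound}.

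There is no genuine obstacle; the only point needing care is the first step. The cancellation relies on both acquisitions sharing the same ideal background vector $\mathbf y_e^{\mathrm{bg}}$, with any mismatch from that shared background placed entirely in $\boldsymbol\delta_e$. I would state this explicitly so the bound is not misread as also controlling target--environment interaction effects.

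Optionally, I would remark that the bound is tight: it is attained when the three error vectors are positively collinear after the sign flips. I would also note that, under an additional independence and zero-mean assumption on the two noise terms, a sharper bound holds in expectation via $\mathbb E\|\cdot\|_2^2$. Neither remark is needed for the stated result.
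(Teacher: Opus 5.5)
Your proposal is correct and matches the paper's proof. Both substitute the two lines of Eq.~\eqref{eq:paired-observation-model} so that the ideal background cancels, giving $\widetilde{\Delta\mathbf y}_e-\mathbf s_e(\chi)=-\boldsymbol\delta_e+\boldsymbol\eta^{\mathrm{obj}}-\boldsymbol\eta^{\mathrm{bg}}$, and then apply the triangle inequality in the complex measurement space.
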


\begin{proof}
Subtracting the two observations in Eq.~\eqref{eq:paired-observation-model} gives
\begin{equation}
\widetilde{\Delta\mathbf y}_e-\mathbf s_e(\chi)
=-\boldsymbol\delta_e+\boldsymbol\eta^{\mathrm{obj}}-
\boldsymbol\eta^{\mathrm{bg}}.
\end{equation}
Applying the triangle inequality in the finite-dimensional complex measurement space gives Eq.~\eqref{eq:difference-bound}.
\end{proof}

The bound makes no assumption about the phases or correlation of the three perturbations. Shared background scattering cancels, whereas unshared drift and observation errors remain. Background conditioning can account for structured scene effects still present in the difference; the subtraction ablation tests the empirical value of this measurement preprocessing.

\subsection{Weighted Normal-Operator Interpretation}\label{sec:theory-normal}

Locally linearize the target contrast on the common DOI grid:
\begin{equation}
\Delta\mathbf y_e=\mathbf A_e\boldsymbol\chi+\mathbf r_e,
\label{eq:linearized-model}
\end{equation}
The residual $\mathbf r_e$ collects multiple scattering, discretization approximation, and remaining background mismatch. Temporarily fix the frequency-weight matrix $\mathbf W_e$. Before magnitude aggregation and spatial normalization, define the stacked adjoint $\mathbf g_e=\mathbf A_e^H\mathbf W_e\Delta\mathbf y_e$.

\begin{theorem}[Weighted normal-operator decomposition]\label{thm:normal-operator}
If Eq.~\eqref{eq:linearized-model} holds, then
\begin{equation}
\mathbf g_e=
\mathbf A_e^H\mathbf W_e\mathbf A_e
\boldsymbol\chi+\mathbf A_e^H\mathbf W_e\mathbf r_e.
\label{eq:normal-decomposition}
\end{equation}
Let $\mathbf H_e=\mathbf A_e^H\mathbf W_e\mathbf A_e$. If $\mathbf W_e$ is Hermitian positive semidefinite, then $\mathbf H_e$ is a positive-semidefinite weighted normal operator.
\end{theorem}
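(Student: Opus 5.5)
The argument splits into the two claims of Theorem~\ref{thm:normal-operator}: an algebraic decomposition and a positivity statement.

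\textbf{Decomposition.} Substitute the linearized model Eq.~\eqref{eq:linearized-model} into the definition $\mathbf g_e=\mathbf A_e^H\mathbf W_e\Delta\mathbf y_e$. Since $\mathbf W_e$ is held fixed, the map $\mathbf A_e^H\mathbf W_e$ is linear, so distributing it over $\mathbf A_e\boldsymbol\chi+\mathbf r_e$ yields Eq.~\eqref{eq:normal-decomposition} immediately. One caveat needs stating. In Eq.~\eqref{eq:raw-adjoint} the weights $1/d_{e,f}$ depend on the data. To make the identity meaningful, I would read $\mathbf W_e$ as the block-diagonal matrix with these scalars evaluated at the observed $\Delta\mathbf y_e$ and then frozen. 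With this reading, $\mathbf W_e$ is real, diagonal and nonnegative. The stacked operator $\mathbf A_e$ is the vertical concatenation of the blocks $\mathbf A_{e,t,f}$, so $\mathbf g_e$ is the collection of the per-block fields $\mathbf g_{e,t,f}$ before magnitudes are taken.

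\textbf{Hermitian and positive semidefinite.} Let $\mathbf H_e=\mathbf A_e^H\mathbf W_e\mathbf A_e$.
\begin{itemize}
\item Hermitian: $\mathbf H_e^H=\mathbf A_e^H\mathbf W_e^H\mathbf A_e=\mathbf H_e$, using $\mathbf W_e^H=\mathbf W_e$.
\item Positive semidefinite: for any contrast vector $\mathbf x$ on the DOI grid, $\mathbf x^H\mathbf H_e\mathbf x=(\mathbf A_e\mathbf x)^H\mathbf W_e(\mathbf A_e\mathbf x)\ge 0$. This holds because the quadratic form of a Hermitian positive-semidefinite $\mathbf W_e$ is nonnegative at every vector, in particular at $\mathbf A_e\mathbf x$.
\end{itemize}
Optionally, writing $\mathbf W_e=\mathbf W_e^{1/2}\mathbf W_e^{1/2}$ gives $\mathbf H_e=(\mathbf W_e^{1/2}\mathbf A_e)^H(\mathbf W_e^{1/2}\mathbf A_e)$, which identifies $\mathbf H_e$ as a genuine normal operator of the weighted forward map.

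\textbf{Where care is needed.} None of the steps is technically hard. The only delicate point is the status of $\mathbf W_e$. Because its entries depend on $\Delta\mathbf y_e$, the decomposition is an identity at fixed weights, not a statement that $\mathbf g_e$ depends linearly on $\boldsymbol\chi$. I would state explicitly that $\mathbf W_e$ is frozen, and that the magnitude aggregation and normalization in Eq.~\eqref{eq:knowledge-map} lie outside the theorem's scope.
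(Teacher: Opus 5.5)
Your proposal is correct and follows the paper's proof: you substitute Eq.~\eqref{eq:linearized-model} into $\mathbf g_e=\mathbf A_e^H\mathbf W_e\Delta\mathbf y_e$, then show $\mathbf x^H\mathbf H_e\mathbf x=(\mathbf A_e\mathbf x)^H\mathbf W_e(\mathbf A_e\mathbf x)\ge0$. Your additions (the explicit Hermitian check, freezing the data-dependent weights $1/d_{e,f}$, and the factorization through $\mathbf W_e^{1/2}$) match the paper's convention of temporarily fixing $\mathbf W_e$ and leave the argument unchanged.
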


\begin{proof}
Substitute Eq.~\eqref{eq:linearized-model} into $\mathbf g_e=\mathbf A_e^H\mathbf W_e\Delta\mathbf y_e$ to obtain Eq.~\eqref{eq:normal-decomposition}. For any $\mathbf x$,
\begin{equation}
\mathbf x^H\mathbf H_e\mathbf x
=(\mathbf A_e\mathbf x)^H\mathbf W_e(\mathbf A_e\mathbf x)\ge0,
\end{equation}
thus $\mathbf H_e$ is positive semidefinite.
\end{proof}

The term $\mathbf H_e\boldsymbol\chi$ maps observable contrast back to the DOI; the finite aperture and frequency coverage determine its broadened peaks, sidelobes, and null space. The residual is backprojected by $\mathbf A_e^H\mathbf W_e\mathbf r_e$. The adjoint therefore supplies coarse support and location, while the learned encoder and pointwise decoder interpret ambiguous spatial evidence and recover a continuous surface.

\subsection{Feasible Set of the Bounded Point Cloud}\label{sec:theory-feasible}

Let each template point $\mathbf t_i$ belong to $\Omega$. At each stage, predict a bounded displacement according to Eq.~\eqref{eq:stage-update} and apply the box projection.

\begin{theorem}[Two-stage feasible set and displacement bound]\label{thm:deformation-bound}
Each final point satisfies
\begin{equation}
\widehat{\mathbf p}_i\in\Omega,
\qquad
\|\widehat{\mathbf p}_i-\mathbf t_i\|_\infty\le0.35\ \mathrm m.
\label{eq:deformation-bound}
\end{equation}
\end{theorem}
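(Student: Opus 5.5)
Membership in $\Omega$ follows directly from the definition of the update in Eq.~\eqref{eq:stage-update}: the final point is $\widehat{\mathbf p}_i=\mathbf p_i^{(2)}=\Pi_\Omega(\cdot)$, and the box projection maps every point of $\mathbb R^3$ into $\Omega$. For the displacement bound, I will write $\mathbf p_i^{(0)}=\mathbf t_i$ and show by induction on the stage that $\|\mathbf p_i^{(s)}-\mathbf p_i^{(s-1)}\|_\infty\le a_s$ for $s=1,2$. The triangle inequality in the $\ell_\infty$ norm then gives $\|\widehat{\mathbf p}_i-\mathbf t_i\|_\infty\le a_1+a_2=0.35$~m.

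The first ingredient is the bound on the raw displacement. Because $\tanh$ takes values in $(-1,1)$ componentwise, every coordinate of $\Delta\mathbf p_i^{(s)}=a_s\tanh(\cdot)$ has absolute value less than $a_s$. Hence $\|\Delta\mathbf p_i^{(s)}\|_\infty<a_s$, and this holds whatever the network output is.

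The second ingredient, and the only real step, is that the projection does not increase the step. The box $\Omega=[-1,1]\times[-1,1]\times[0,1.9]$ is a product of intervals, so $\Pi_\Omega$ acts coordinatewise as clipping, $\Pi_\Omega(\mathbf x)_k=\min(\max(x_k,l_k),u_k)$. Clipping to an interval is $1$-Lipschitz on $\mathbb R$, so $\Pi_\Omega$ is nonexpansive in $\|\cdot\|_\infty$. The inductive hypothesis places $\mathbf p_i^{(s-1)}$ in $\Omega$; at $s=1$ this is the assumption $\mathbf t_i\in\Omega$. Therefore $\Pi_\Omega(\mathbf p_i^{(s-1)})=\mathbf p_i^{(s-1)}$, and
\[
\|\mathbf p_i^{(s)}-\mathbf p_i^{(s-1)}\|_\infty
=\|\Pi_\Omega(\mathbf p_i^{(s-1)}+\Delta\mathbf p_i^{(s)})-\Pi_\Omega(\mathbf p_i^{(s-1)})\|_\infty
\le\|\Delta\mathbf p_i^{(s)}\|_\infty< a_s .
\]
The same induction also records that $\mathbf p_i^{(s)}\in\Omega$ at every stage.

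The main point to check is this fixed-point argument. It needs the previous iterate to lie in $\Omega$, which the template assumption supplies at the start and the projection supplies afterwards. It also relies on $\Pi_\Omega$ being the coordinatewise (Euclidean) box projection, so that it is nonexpansive in $\ell_\infty$ and not only in $\ell_2$. I will state the clipping form explicitly so that this nonexpansiveness is evident.
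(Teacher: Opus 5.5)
Your proof is correct and takes the same route as the paper's argument. The componentwise $\tanh$ bound keeps the per-stage displacements below $a_1=0.25$~m and $a_2=0.10$~m. The $\ell_\infty$-nonexpansiveness of the coordinatewise box projection preserves these bounds, and the triangle inequality then gives the cumulative $0.35$~m bound. Your explicit fixed-point step $\Pi_\Omega(\mathbf p_i^{(s-1)})=\mathbf p_i^{(s-1)}$ spells out what the paper summarizes as ``applying nonexpansiveness at both stages.''
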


The hyperbolic tangent bounds the per-coordinate displacements by 0.25~m and 0.10~m. Coordinatewise box projection is nonexpansive:
\begin{equation}
\|\Pi_\Omega(\mathbf x)-\Pi_\Omega(\mathbf z)\|_\infty
\le\|\mathbf x-\mathbf z\|_\infty,
\label{eq:projection-nonexpansive}
\end{equation}
Applying nonexpansiveness at both stages and the triangle inequality gives the cumulative bound. The theorem guarantees that all 2048 points remain in the DOI and constrains extreme coordinate drift. Reconstruction accuracy still depends on template coverage, local evidence, and predicted displacement directions. A complete proof is provided in Online Resource 1 (Section S1).

\section{Experiments and Results}\label{sec:experiments}\label{sec:results}

\subsection{Experimental Setup}\label{sec:experimental-setup}

All data were generated by full-wave electromagnetic simulation in complex static environments. Each sample comprises target-present and background complex responses under the same acquisition geometry and a metric human point cloud. All scenes use the DOI in Eq.~\eqref{eq:doi}, four transmitters, twelve receiver locations, and frequencies of 300, 450, and 600~MHz. Only the static scattering structures and their layouts differ. env\_000 contains a floor; env\_001 adds walls on the positive $x$ and $y$ sides; and env\_002 contains a floor and three cuboid objects but no walls. env\_003 has a floor, walls on the negative $x$ and $y$ sides, and four objects. The independent env\_004 layout has a floor, walls on the positive $x$ and negative $y$ sides, and four objects. env\_005 has a floor, walls on the negative $x$ and positive $y$ sides, and four objects, with wall dimensions and object arrangements differing from the other environments. Figure~\ref{fig:measurement-configuration} illustrates env\_004.

The split is by environment (Table~\ref{tab:environment-split}). Each of env\_000--env\_002 contributes 640 training samples (1920 total); env\_003 has 570 validation samples; and env\_004 has 500 principal test samples. The new background env\_005 has 70 samples for extended pose evaluation, ten from each of seven pose groups already represented in training.

\begin{table}[!tb]
\centering
\caption{Environment structure and data split. Counts are the samples included in each set.}
\label{tab:environment-split}
\footnotesize
\setlength{\tabcolsep}{3pt}
\begin{tabularx}{\columnwidth}{@{}lXrl@{}}
\toprule
Environment & Static structure & Samples & Role \\
\midrule
env\_000 & Floor & 640 & Training \\
env\_001 & Floor, two walls & 640 & Training \\
env\_002 & Floor, three objects & 640 & Training \\
env\_003 & Floor, two walls, four objects & 570 & Validation \\
env\_004 & Floor, two walls, four objects & 500 & Main test \\
env\_005 & Floor, two walls, four objects & 70 & Extended test \\
\bottomrule
\end{tabularx}
\end{table}

\begin{table*}[!t]
\centering
\caption{Method comparison on the principal test environment env\_004. Trainable methods report means over three random seeds. The best mean for each metric is bold and shaded. CD-L2 is in m$^2$; Root-CD, centroid error, and H95 are in m.}
\label{tab:main-comparison}
\footnotesize
\setlength{\tabcolsep}{2pt}
\begin{tabular*}{\textwidth}{@{\extracolsep{\fill}}lrrrrrr@{}}
\toprule
 & \multicolumn{2}{c}{Overall geometry} & \multicolumn{2}{c}{Threshold coverage} & \multicolumn{2}{c}{Location and tail error} \\
\cmidrule(lr){2-3}\cmidrule(lr){4-5}\cmidrule(l){6-7}
Method & CD-L2\(\downarrow\) & Root-CD\(\downarrow\) & F@0.02\(\uparrow\) & F@0.05\(\uparrow\) & Centroid\(\downarrow\) & H95\(\downarrow\) \\
\midrule
\rowcolor{rowgray}Direct Born adjoint & \(0.544516\) & \(0.737076\) & \(0.040135\) & \(0.162922\) & \(0.264834\) & \(0.975135\) \\
Data-only TopNet & \(0.004825\) & \(0.069099\) & \(0.188703\) & \(0.755998\) & \(0.155409\) & \(0.092107\) \\
\textbf{R1A-PC} & \cellcolor{bestcell}\(\mathbf{0.001434}\) & \cellcolor{bestcell}\(\mathbf{0.037753}\) & \cellcolor{bestcell}\(\mathbf{0.449782}\) & \cellcolor{bestcell}\(\mathbf{0.963080}\) & \cellcolor{bestcell}\(\mathbf{0.025276}\) & \cellcolor{bestcell}\(\mathbf{0.045617}\) \\
\bottomrule
\end{tabular*}
\end{table*}

The baselines are a deterministic direct Born-adjoint reconstruction and data-only TopNet. The Born baseline chooses the 2048 voxel centres with the highest frequency-balanced adjoint scores. TopNet receives the same complex difference and generates points from a global latent code. Three separately retrained ablations remove physical guidance, background subtraction, or cross-environment consistency while retaining the remaining training and evaluation conditions.

For predicted cloud $\mathcal P$ and ground-truth cloud $\mathcal Q$, the principal error is the symmetric squared Chamfer distance
\begin{equation}
\operatorname{CD}(\mathcal P,\mathcal Q)=
\frac1{|\mathcal P|}\sum_{\mathbf p\in\mathcal P}
\min_{\mathbf q\in\mathcal Q}\|\mathbf p-\mathbf q\|_2^2+
\frac1{|\mathcal Q|}\sum_{\mathbf q\in\mathcal Q}
\min_{\mathbf p\in\mathcal P}\|\mathbf q-\mathbf p\|_2^2.
\label{eq:chamfer}
\end{equation}
We also report Root-CD, F-scores at 0.02 and 0.05~m, centroid error, and H95 (the 95th percentile of bidirectional nearest-neighbour distances). All metrics use the original metric coordinates. CD-L2 has units of m$^2$; Root-CD, centroid error, and H95 have units of m; the F-scores are dimensionless. Trainable methods report the mean and population standard deviation across three seeds. Target-level comparisons on env\_004 use paired bootstrap intervals and two-sided paired Wilcoxon tests, with Holm correction for prespecified comparisons. Detailed optimization and timing settings are provided in Online Resource 1 (Section S3).

\subsection{Main Test Results}\label{sec:main-results}

Table~\ref{tab:main-comparison} reports the env\_004 results. R1A-PC achieves CD-L2 of \(0.001434\pm0.000087\ \mathrm m^2\), F@0.05 of \(0.963080\pm0.003883\), and H95 of \(0.045617\pm0.001183\ \mathrm m\). Relative to TopNet, CD-L2 falls by 70.28\%, F@0.05 rises by 0.207082, and H95 falls by 50.47\%. The Holm-adjusted $p$ values reported for all three comparisons are \(2.66454\times10^{-14}\). These measures cover mean fit, threshold coverage, and tail error.

F@0.02 increases from 0.188703 for TopNet to 0.449782 for R1A-PC. Root-CD and centroid error fall to 0.037753 and 0.025276~m, respectively. Paired subtraction attenuates dominant scene components, while the adjoint map supplies a spatial anchor for shape refinement.

Figure~\ref{fig:qualitative-main} compares four fixed validation examples from env\_003. Direct Born reconstruction roughly locates the body but remains affected by sidelobes and spatial blur. TopNet generates a coherent global shape but can err in absolute location or limb direction. R1A-PC more closely follows the raised-arm, bent, and side-facing reference shapes, consistent with coarse adjoint localization followed by learned surface refinement.

\begin{figure*}[!tp]
\centering
\includegraphics[width=0.82\textwidth]{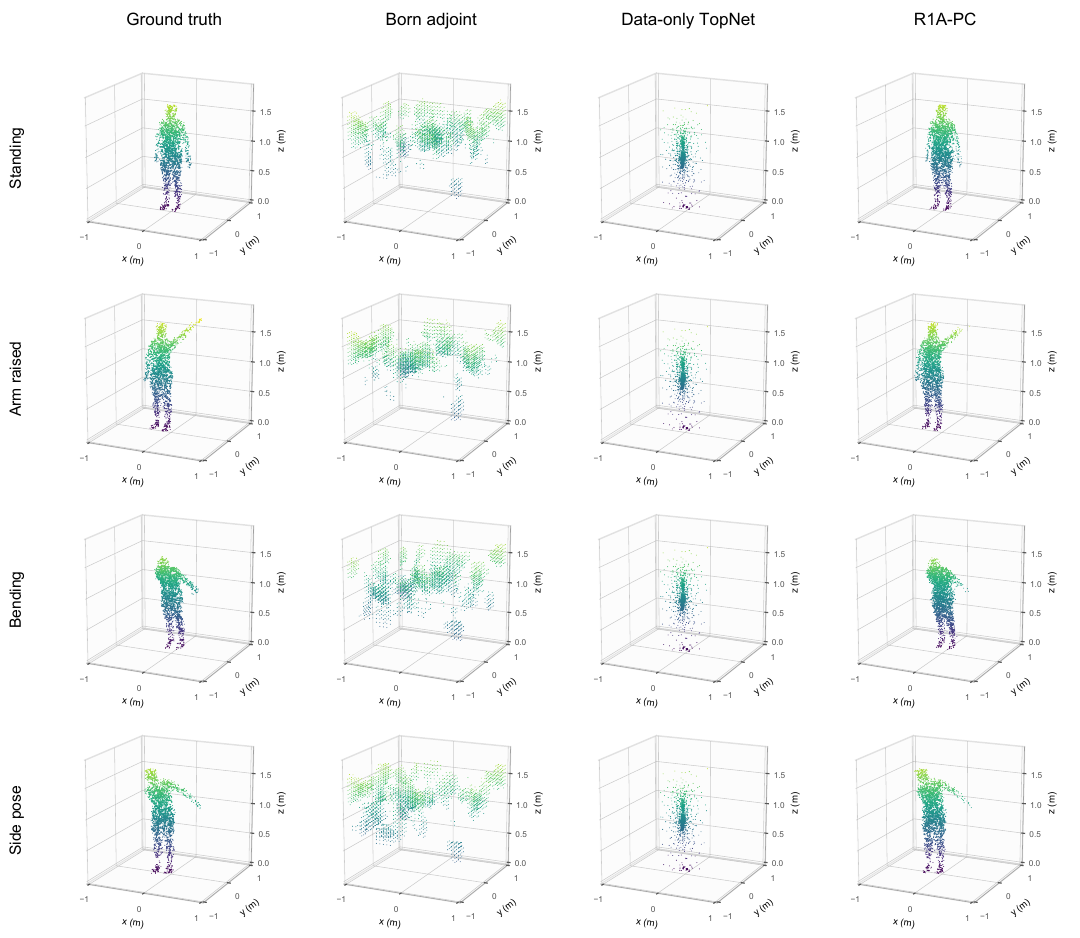}
\caption{Qualitative comparison for four env\_003 poses. Rows show standing, raised arm, bent torso, and side-facing poses; columns show ground truth, Born adjoint, data-only TopNet, and R1A-PC. Colour encodes height $z$. All panels use metric coordinates, common DOI limits, an orthographic camera, and the same point size}
\label{fig:qualitative-main}
\end{figure*}

\begin{figure*}[!tp]
\centering
\includegraphics[width=0.94\textwidth]{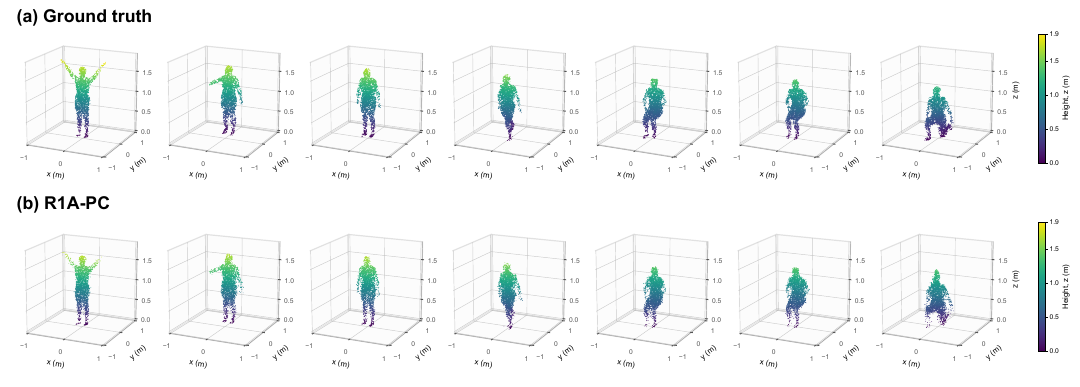}
\caption{Seven extended poses in env\_005. The upper row shows ground truth and the lower row R1A-PC. Columns show both arms raised, one arm extended forward, slight forward lean, standing lunge, natural sitting, forward-leaning sitting, and one-knee kneeling}
\label{fig:extended-seven-actions}
\end{figure*}

Figure~\ref{fig:extended-seven-actions} illustrates the seven extended poses in env\_005. Figure~\ref{fig:target-distribution} displays target-level error distributions after averaging each target over the three seeds. The empirical distribution for R1A-PC lies to the left of TopNet's. The R1A-PC minus TopNet CD-L2 differences are negative for all 500 test targets, so the mean advantage in Table~\ref{tab:main-comparison} extends across the principal test set.

\begin{figure}[!tb]
\centering
\ifdim\columnwidth<\textwidth
  \includegraphics[width=0.86\columnwidth]{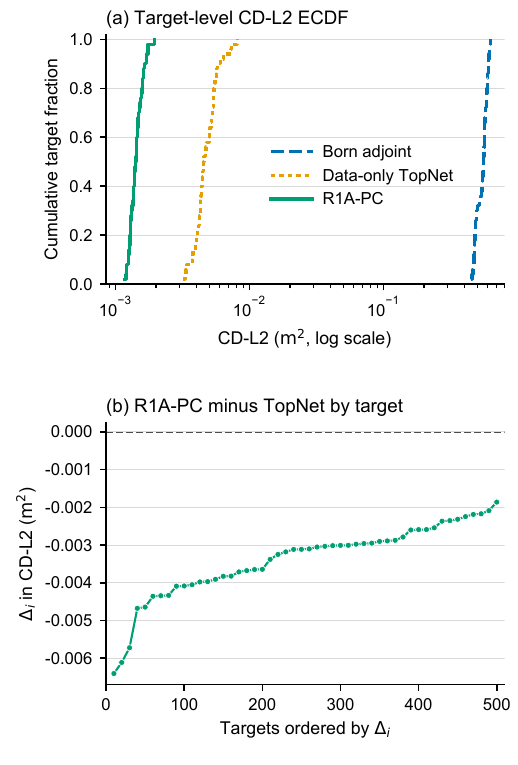}
\else
  \includegraphics[width=0.58\textwidth]{Fig6.pdf}
\fi
\caption{Reconstruction-error distributions for 500 test targets in env\_004. (a) Target-level empirical cumulative distributions of CD-L2 for the Born adjoint, data-only TopNet, and R1A-PC. (b) Target-level differences in CD-L2 between R1A-PC and TopNet; the zero line denotes equal errors}
\label{fig:target-distribution}
\end{figure}

\subsection{Component Ablations}\label{sec:ablation-results}

As Table~\ref{tab:ablation} shows, removing physical guidance or background subtraction increases CD-L2 by 242.06\% or 241.18\%, respectively. The former provides location-dependent spatial evidence, while the latter suppresses the shared static response at the measurement level.

\begin{table}[!tb]
\centering
\caption{Component ablations. Values are means of three independent trainings; relative CD-L2 changes use complete R1A-PC as the reference. CD-L2 is in m$^2$ and H95 in m.}
\label{tab:ablation}
\footnotesize
\setlength{\tabcolsep}{1.5pt}
\begin{tabular*}{\columnwidth}{@{\extracolsep{\fill}}lrrrr@{}}
\toprule
 & \multicolumn{3}{c}{Reconstruction quality} & \multicolumn{1}{c}{Relative change} \\
\cmidrule(lr){2-4}\cmidrule(l){5-5}
Method & CD-L2\(\downarrow\) & F@0.05\(\uparrow\) & H95\(\downarrow\) & \(\Delta\)CD-L2 \\
\midrule
\rowcolor{rowgray}Without physical guidance & \(0.004907\) & \(0.763553\) & \(0.092685\) & \(242.06\) \\
Without background subtraction & \(0.004894\) & \(0.763412\) & \(0.092544\) & \(241.18\) \\
\rowcolor{rowgray}Without cross-environment consistency & \(0.001459\) & \(0.962987\) & \(0.045780\) & \(1.7\) \\
\textbf{R1A-PC} & \cellcolor{bestcell}\(\mathbf{0.001434}\) & \cellcolor{bestcell}\(\mathbf{0.963080}\) & \cellcolor{bestcell}\(\mathbf{0.045617}\) & \(0.0\) \\
\bottomrule
\end{tabular*}
\end{table}

Without cross-environment consistency, CD-L2 is \(0.001459\ \mathrm m^2\), a 1.70\% increase with adjusted \(p=0.0012979\). The corresponding adjusted $p$ values for F@0.05 and H95 are 0.908598 and 0.408952. This output-level regularizer provides a small gain in mean geometry error but has limited effect on the other two metrics.

\subsection{Cross-Environment Results}\label{sec:cross-environment-results}

Table~\ref{tab:cross-environment} reports environment-level accuracy on env\_002, env\_003, and env\_004.

\begin{table}[!tb]
\centering
\caption{R1A-PC reconstruction results on three background levels; means of three independent trainings. CD-L2 is in m$^2$ and centroid error and H95 are in m.}
\label{tab:cross-environment}
\footnotesize
\setlength{\tabcolsep}{1.5pt}
\begin{tabular*}{\columnwidth}{@{\extracolsep{\fill}}lrrrr@{}}
\toprule
 & \multicolumn{2}{c}{Overall accuracy} & \multicolumn{2}{c}{Location and tail error} \\
\cmidrule(lr){2-3}\cmidrule(l){4-5}
Environment & CD-L2\(\downarrow\) & F@0.05\(\uparrow\) & Centroid\(\downarrow\) & H95\(\downarrow\) \\
\midrule
\rowcolor{rowgray}env\_002 & \(0.001210\) & \(0.976085\) & \(0.023693\) & \(0.041576\) \\
env\_003 & \(0.001592\) & \(0.950150\) & \(0.029864\) & \(0.049623\) \\
\rowcolor{rowgray}env\_004 & \(0.001434\) & \(0.963080\) & \(0.025276\) & \(0.045617\) \\
\bottomrule
\end{tabular*}
\end{table}

Errors vary with the background propagation structure but remain of a similar order of magnitude. Background encoding conditions the difference measurements on the scene, while the adjoint map preserves spatial localization.

\subsection{Extended Pose Results}\label{sec:multi-pose-results}

Results for the seven poses in env\_005 appear in Table~\ref{tab:multi-pose}, with examples in Fig.~\ref{fig:extended-seven-actions}. R1A-PC obtains CD-L2 of \(0.004845\pm0.000270\ \mathrm m^2\), F@0.05 of \(0.863498\pm0.012397\), and H95 of \(0.092827\pm0.008801\ \mathrm m\).

\begin{table}[!tb]
\centering
\caption{Mean performance over seven pose groups in the new environment env\_005. Trainable methods report means across three random seeds. CD-L2 is in m$^2$ and H95 in m.}
\label{tab:multi-pose}
\footnotesize
\begin{tabular*}{\columnwidth}{@{\extracolsep{\fill}}lrrr@{}}
\toprule
 & \multicolumn{3}{c}{New-environment pose reconstruction} \\
\cmidrule(l){2-4}
Method & CD-L2\(\downarrow\) & F@0.05\(\uparrow\) & H95\(\downarrow\) \\
\midrule
\rowcolor{rowgray}Direct Born adjoint & \(0.608902\) & \(0.143379\) & \(1.018\) \\
Data-only TopNet & \(0.008265\) & \(0.621074\) & \(0.114149\) \\
\textbf{R1A-PC} & \cellcolor{bestcell}\(\mathbf{0.004845}\) & \cellcolor{bestcell}\(\mathbf{0.863498}\) & \cellcolor{bestcell}\(\mathbf{0.092827}\) \\
\bottomrule
\end{tabular*}
\end{table}

Compared with TopNet, R1A-PC reduces CD-L2 and H95 and raises F@0.05 in this new background. Each group has ten samples; the figure shows one example per group, whereas the table summarizes all 70 samples.

\subsection{Computational Efficiency}\label{sec:efficiency-results}

Table~\ref{tab:efficiency} compares parameters, inference time, and peak memory on the same hardware. R1A-PC has 1,114,023 trainable parameters, takes \(6.460\pm0.415\)~ms per sample on average, and uses 68.3~MiB peak memory. TopNet takes 2.164~ms and uses 16.8~MiB. The additional work is mainly the three-dimensional physical-feature encoding and multiscale querying at 2048 current point positions.

\begin{table}[!tb]
\centering
\caption{Computational efficiency on an NVIDIA GeForce RTX 4060 Laptop GPU. Times are mean milliseconds per sample after input construction; peak memory is in MiB.}
\label{tab:efficiency}
\footnotesize
\setlength{\tabcolsep}{1.5pt}
\begin{tabular*}{\columnwidth}{@{\extracolsep{\fill}}lrrrr@{}}
\toprule
 & \multicolumn{2}{c}{Model size} & \multicolumn{2}{c}{Inference cost} \\
\cmidrule(lr){2-3}\cmidrule(l){4-5}
Method & Parameters & Output points & Time\(\downarrow\) & Memory\(\downarrow\) \\
\midrule
\rowcolor{rowgray}Direct Born adjoint & \(0\) & \(2\,048\) & \(29.803\) & \(140.3\) \\
Data-only TopNet & \(1\,029\,645\) & \(2\,048\) & \(2.164\) & \(16.8\) \\
\rowcolor{rowgray}\textbf{R1A-PC} & \(1\,114\,023\) & \(2\,048\) & \(6.460\) & \(68.3\) \\
\bottomrule
\end{tabular*}
\end{table}

Direct Born reconstruction takes 29.803~ms and uses 140.3~MiB; explicit voxel scoring, sorting, and point selection contribute to the cost. All times exclude construction of the model inputs.

\section{Conclusion}\label{sec:conclusion}

In complex static environments, strong multipath responses couple weak human perturbations to the background, while limited acquisition channels restrict the spatial resolution of three-dimensional inversion. R1A-PC combines paired complex background subtraction, a frequency-balanced conjugate Born adjoint, and multiscale physical features queried at the current locations of deforming points. Complex measurements, a background condition, and local spatial evidence jointly drive two bounded stages from a shared template to a 2048-point human cloud.

On the background-disjoint env\_004 main test, CD-L2, F@0.05, and H95 are \(0.001434\ \mathrm m^2\), 0.963080, and \(0.045617\ \mathrm m\), respectively. CD-L2 is 70.28\% below TopNet. Removing physical guidance or background subtraction increases CD-L2 by 242.06\% or 241.18\%. In the new-background pose evaluation, CD-L2 is \(0.004845\ \mathrm m^2\) and F@0.05 is 0.863498. Cross-environment consistency acts mainly as an auxiliary regularizer.

These findings are based on full-wave simulation under one fixed acquisition configuration: four transmitters, twelve receiver positions, three frequencies, and predetermined DOI and material settings. Training, validation, and principal testing are separated by background. Real-world deployment will require calibration for actual array positions, material properties, and a practicable paired-background reference.

\FloatBarrier
\backmatter
\section*{Statements and Declarations}
\bmhead{Funding}
The authors received no funding for this study.
\bmhead{Competing interests}
The authors declare that they have no competing interests.
\bmhead{Data availability}
The simulation data supporting the findings of this study are not publicly available but may be obtained from the corresponding author upon reasonable request.
\bibliographystyle{sn-basic}
\bibliography{references}
\end{document}